\documentclass[10pt,twocolumn,letterpaper]{article}

\usepackage[margin=0.75in,columnsep=0.25in]{geometry}
\usepackage{amsmath}
\usepackage{amssymb}
\usepackage{amsthm}
\usepackage{graphicx}
\usepackage{array}
\usepackage{url}
\usepackage{times}
\usepackage{booktabs}
\usepackage[hidelinks]{hyperref}
\usepackage{newunicodechar}
\newunicodechar{−}{-}
\newunicodechar{⁻}{\textsuperscript{-}}
\newcommand{\SE}{\mathrm{SE}(3)}

\newcommand{\svec}{\mathbf{s}}
\newcommand{\gvec}{\mathbf{g}}

\makeatletter
\renewcommand{\@seccntformat}[1]{\csname the#1\endcsname.\hspace{0.5em}}
\makeatother

\newtheorem{proposition}{Proposition}

\newcommand{\wfA}{lorem ipsum dolor sit amet consectetur adipiscing elit sed do}
\newcommand{\wfB}{eiusmod tempor incididunt ut labore et dolore magna aliqua enim}
\newcommand{\wfC}{ad minim veniam quis nostrud exercitation ullamco laboris nisi ut}
\newcommand{\wfD}{aliquip ex ea commodo consequat duis aute irure in reprehenderit}
\newcommand{\wfE}{voluptate velit esse cillum dolore eu fugiat nulla pariatur ex}
\newcommand{\wfF}{sint occaecat cupidatat non proident sunt in culpa qui officia}
\newcounter{wfn}\newcounter{wfi}\newcounter{wfp}
\newcommand{\wordfill}[1]{%
  \setcounter{wfn}{0}\setcounter{wfi}{1}\setcounter{wfp}{0}%
  \loop\ifnum\value{wfn}<#1\relax
    \ifcase\value{wfi}\or\wfA\or\wfB\or\wfC\or\wfD\or\wfE\or\wfF\fi\ %
    \addtocounter{wfn}{10}\stepcounter{wfi}\stepcounter{wfp}%
    \ifnum\value{wfi}>6\setcounter{wfi}{1}\fi
    \ifnum\value{wfp}>7\par\setcounter{wfp}{0}\fi
  \repeat\par}

\title{\vspace{-1.2cm}What Symmetry Buys a Learned Motion Planner}
\author{Andrea Emir Sevincel\\
\small Shanghai Jiao Tong University}
\date{}

\begin{document}
\maketitle

\begin{abstract}
\noindent
Learning-based motion planners pay at training what classical planners
pay per query. Trained in world coordinates, they relearn the same
motion at every position and orientation. Existing work restores the
missing rigid-body equivariance in the training data, in the inference
operator, or in the weights, and each carries a cost. We ask how much of
that equivariance the planning query supplies for free.

A start $\mathbf{s}$ and a goal $\mathbf{g}$ determine a frame in closed
form, with origin at their midpoint and first axis along
$\mathbf{g}-\mathbf{s}$. Expressing trajectory and obstacles in that
frame removes three translations and two rotations of $\mathrm{SE}(3)$,
at initialisation, for one cross product per query and with no
constraint on the architecture. A single rotation about the start-goal
axis remains, and no continuous rule removes it. On a cluttered 3D
benchmark, holding architecture, data and budget fixed, the frame raises
the held-out collision-free rate from $14.60\%$ to $51.10\%$, where a
straight segment from start to goal scores $15.6\%$ and the world-frame
model does not beat it. We build all three mechanisms for the residual
rotation and each is worth under a point, though the equivariant
backbone reaches any given level two to three times sooner. What the
representation supplies therefore dominates what any mechanism enforces,
and the standard diagnostic does not see the difference: two models with
indistinguishable non-equivariance residuals differ by $28$ points.
Calibrated against a non-symmetry intervention, the frame is not even the
largest effect available, since local geometry is worth $+40.0$ where the
frame is worth $+36.5$.

\end{abstract}

\section{Introduction}

Motion planning is a fundamental problem in robotics, requiring the generation of
collision-free, kinematically feasible trajectories from a start to a goal
configuration in complex environments. Equivariance to rigid-body transformations
is a cornerstone of generalization in this setting. It ensures that a planner
solves structurally identical tasks identically, regardless of where in the world
they are posed. Sampling-based~\cite{kuffner2000rrtconnect} and
optimization-based~\cite{schulman2014trajopt,zucker2013chomp} algorithms are
equivariant by construction, since they act on the geometry directly, and they
offer completeness or convergence guarantees. Their per-query computational cost
limits real-time applicability. Learning-based
planners~\cite{carvalho2023mpd,fishman2022mpinets,janner2022diffuser,qureshi2019mpnet}
pay at training what those planners pay per query, and have emerged as a
promising alternative. They do not inherit the symmetry. A network trained on
world coordinates relearns the same motion at every position and orientation.

Existing work restores it in one of three parts of the pipeline. In the data,
through augmentation. In the inference operator, through frame averaging or
learned canonicalization~\cite{puny2022fa, kaba2023canon}. Or in the weights,
through architectures equivariant by
construction~\cite{cohen2016gcnn,deng2021vn,fuchs2020se3t,thomas2018tfn}, as
adopted in recent robot learning systems~\cite{ryu2023edf,wang2024equidp}. Each
buys the symmetry at a price: a larger training distribution, additional forward
passes per query, or a restricted hypothesis class. Before paying any of them,
it is worth asking how much of the symmetry the problem statement already
supplies.

A planning query names a start and a goal, and those two points determine a canonical
frame. One can place the origin at their midpoint, align the first axis with the
direction between them, and express every waypoint and every obstacle in that
frame. Three translational and two rotational degrees of freedom vanish exactly,
at initialisation, at the cost of one cross product per query and with no constraint on the
architecture. The six numbers that conditioned the network on the query collapse
to one invariant scalar. The frame needs a second axis to be determined, and ours
takes it from a fixed world axis, so rotating the whole problem rolls the frame
rather than leaving it unchanged. What survives is exactly that rotation: $\SE$
collapses to $\mathrm{SO}(2)$. No continuous rule can fix the
remainder~\cite{milnor1978hairy}, so the sixth degree of freedom is left to one
of the three mechanisms above, and we build all three. Fig.~\ref{fig:teaser} takes the group apart
one stage at a time.

We build all three on a cluttered 3D benchmark, hold the architecture, data and
training budget fixed, and score every model on problems it never saw,
alongside a straight line from start to goal and three classical planners. The
frame is worth $+36.5$ points of held-out collision-free rate; the three
mechanisms for the sixth are worth $+0.8$, $+0.68$ and $+0.15$. That line scores
$15.6\%$, which the world-frame model does not reach.

Our contributions are as follows.

\begin{itemize}\itemsep2pt
  \item \textbf{Five of six degrees of freedom are free.} The start and the goal determine a canonical frame, and expressing the task in it removes five degrees of freedom exactly, at initialisation, for one cross product per query and with no change to the architecture, the data or the budget. It raises the held-out collision-free rate from 14.60\% to 51.10\%, against 15.6\% for a straight segment from start to goal, which the world-frame model does not beat. On an L-shaped rigid body, which has an orientation as well as a position, the reduced model reaches 2.9× the world-frame model at the same budget. Replacing flow matching with DDPM, at the same number of forward passes per sample, moves the reduced model to 50.52\% against 13.51\% for the world frame, so the effect does not belong to the sampler.

  \item \textbf{All three mechanisms for the sixth, measured against each
  other.} Frame averaging and the SO(2)-equivariant backbone were trained to convergence, and neither is worth more than a point. Roll augmentation was measured only at a fixed budget, where its +0.8 already sat inside the spread across three seeds. The SO(2)-equivariant backbone reaches any given level two to three times sooner, so what the symmetry buys is optimisation rather than accuracy.

  \item \textbf{A bound on what post-hoc symmetrisation can remove, and its limits as a predictor.}
  We measure the non-equivariance residual $r$ of the trained field and prove
that no symmetrisation of that field can remove more than
$r^2\lVert \mathcal{A}f \rVert^2$ of squared error, and that frame averaging
reaches this bound. The reduced model has $r = 0.017$. That bound is under
$3 \times 10^{-4}$ of the field's squared magnitude, and the model fails on
$48.9\%$ of its samples. The residual does not then predict what a mechanism is worth: two models with indistinguishable r differ by 28 points.
  
  \item \textbf{Calibration against a non-symmetry intervention.}
  Supplying each waypoint with its distance and direction to the nearest obstacle is worth +40.0 points where the frame is worth +36.5, so the representation change studied here is not the largest effect available on this benchmark. With local geometry supplied the frame still adds +24.8, and the two together reach 79.37, against the 91.1 that independence would predict.

\end{itemize}

\begin{figure}[t]
\centering
\includegraphics[width=\columnwidth]{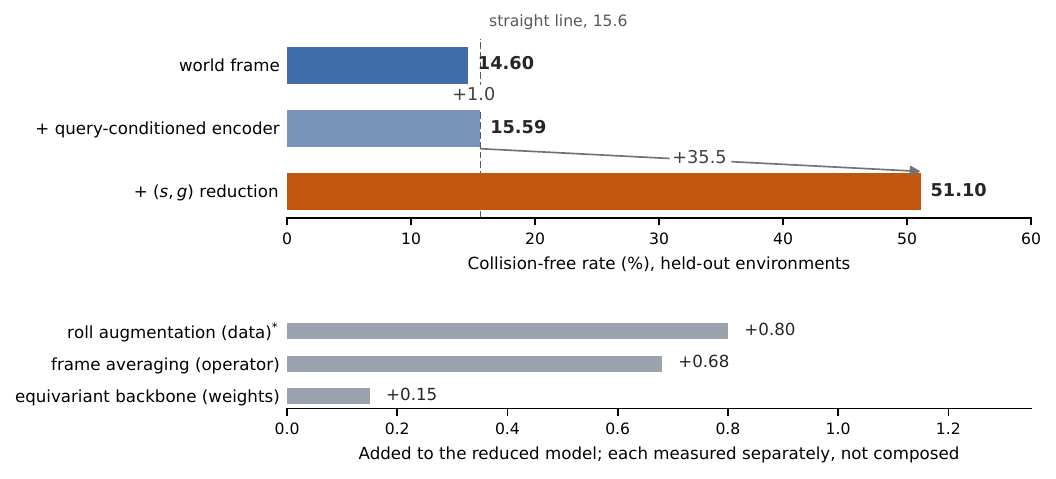}
\caption{\textbf{Sixty environments, trained to convergence.} \emph{Top}: the
frame determined by the start and the goal is worth $+35.5$ points over the
query-conditioned encoder and $+36.5$ over the plain world frame. Showing the
obstacle encoder the same query, with no change of frame, is worth $+1.0$. The
dashed line is a straight segment from start to goal on the identical problems,
which the world-frame model does not beat. \emph{Bottom}: the three mechanisms
for the one degree of freedom the frame cannot remove. Each is measured against
its own baseline. The three are alternatives rather than a decomposition, so
they are not composed here and were never composed in the experiments. Two of
them are converged measurements. Roll augmentation, starred, is the 20-epoch
grid, where its effect was already not significant across three seeds and was
not run further.}
\label{fig:teaser}
\end{figure}

\section{Related Work}

\textbf{Classical planners.} Sampling-based~\cite{kuffner2000rrtconnect} and
optimization-based~\cite{schulman2014trajopt,zucker2013chomp} planners act
directly on geometry, so a rigid transformation of the problem
transforms the solution. They offer probabilistic completeness or local
convergence, at a per-query cost that follows from searching or optimizing at
inference time. We use RRT-Connect and CHOMP to generate expert trajectories,
and all three to bound what is achievable on our benchmark.

\textbf{Learning-based planners.} MPNet~\cite{qureshi2019mpnet} and Motion Policy
Networks~\cite{fishman2022mpinets} move planning into a network evaluated in
milliseconds. Diffuser~\cite{janner2022diffuser}, Motion Planning
Diffusion~\cite{carvalho2023mpd} and Diffusion Policy~\cite{chi2023dp} generate
whole trajectories with a denoising model, and flow
matching~\cite{lipman2023fm} replaces
the denoising chain with a learned velocity field, which is the objective we use.
None of them inherits the symmetry that the classical planners get from the
geometry for free.

\textbf{Restoring the symmetry.} The general framing is
geometric, and equivariant models are known to carry a
strict generalisation benefit under the right conditions~\cite{elesedy2021strict}.
Group-equivariant
convolution~\cite{cohen2016gcnn}, tensor field networks~\cite{thomas2018tfn},
SE(3)-Transformers~\cite{fuchs2020se3t} and vector neurons~\cite{deng2021vn}
build the constraint into the weights, and robot learning has adopted them
widely~\cite{ryu2023edf,wang2024equidp}.
Frame averaging~\cite{puny2022fa} and learned
canonicalization~\cite{kaba2023canon} act at inference instead, and the
projection property our bound rests on is theirs. The Lie
derivative~\cite{gruver2023lie} measures how equivariant a trained model became.
That is the quantity we bound, and we find it is not sufficient to predict what
a mechanism is worth. Frames read off the start and the goal are common practice in manipulation
policy learning, where they are used without being measured. We do not claim
the construction. We claim what it is worth, against a floor and against the
three alternatives.

\section{Method}
In this section, we begin by defining the generative planning objective and the symmetry the task
carries (Sec.~\ref{sec:problem}). Sec.~\ref{sec:decompose} then splits $\SE$ into
the part a query fixes and the part it leaves behind, and identifies the
remainder as $\mathrm{SO}(2)$. That remainder does not yield to the same
treatment, and Sec.~\ref{sec:obstruction} gives two obstructions to it, one
topological and one that applies to scenes carrying a symmetry of their own.
Finally, Sec.~\ref{sec:threeways} sets out the three places a symmetry can be
imposed, in the training data, in the inference operator, and in the weights, and
defines the non-equivariance residual that bounds what the operator can remove.

\subsection{Problem formulation}
\label{sec:problem}

We formulate motion planning in the workspace $\mathbb{R}^3$. A task is the
tuple $\mathcal{P} = (\svec, \gvec, \mathcal{O})$, where
$\svec, \gvec \in \mathcal{C}$ are the start and goal configurations,
$\mathcal{C} = \mathbb{R}^3$ for the point mass studied throughout and
$\mathcal{C} = \SE$ for the rigid-body domain of
Sec.~\ref{sec:replications},
and $\mathcal{O}$ is a set of convex obstacles inducing a signed distance field
$d_{\mathcal{O}} : \mathbb{R}^3 \to \mathbb{R}$. The robot is a sphere of radius
$\rho$ centered at $p$. For the rigid body of Sec.~\ref{sec:replications} it is a
union of five such spheres rigidly attached to the pose, at centres $c_j(p)$; the
point mass is the single-sphere case $c_1(p) = p$. A trajectory is a waypoint
sequence $x = [p_1, \dots, p_N] \in \mathcal{C}^N$ and $\gamma(x)$ the path it
traces, piecewise linear in position and geodesic in orientation. The feasible
set of $\mathcal{P}$ is
\begin{equation}
\begin{aligned}
  \mathcal{S}(\mathcal{P}) = \big\{\, x \;:\;\; & p_1 = \svec,\; p_N = \gvec, \\
  & \min_j d_{\mathcal{O}}\big(c_j(p)\big) \ge \rho
    \;\; \forall\, p \in \gamma(x) \,\big\}.
\end{aligned}
  \label{eq:feasible}
\end{equation}
The objective is to learn a generative model
$p_\theta(x \mid \mathcal{P})$ supported on $\mathcal{S}(\mathcal{P})$.

A rigid transformation $T \in \SE$ acts on a task by
$T\mathcal{P} = (T\svec, T\gvec, T\mathcal{O})$ and on a trajectory waypointwise,
$(Tx)_i = Tp_i$. The task is then exactly $\SE$-equivariant:
\begin{equation}
  \mathcal{S}(T\mathcal{P}) = T\,\mathcal{S}(\mathcal{P}),
  \qquad \forall\, T \in \SE ,
  \label{eq:equivariance}
\end{equation}
and a generative planner should inherit this,
\begin{equation}
  p_\theta(Tx \mid T\mathcal{P}) = p_\theta(x \mid \mathcal{P}),
  \qquad \forall\, T \in \SE .
  \label{eq:planner-equivariance}
\end{equation}
We use conditional flow matching~\cite{lipman2023fm}, which learns a
time-dependent velocity field $v_\theta(x_t, t, \mathcal{P})$ whose integral
transports a simple prior to $p_\theta(x \mid \mathcal{P})$. Velocities are free
vectors, therefore the corresponding condition on the field is
\begin{equation}
  v_\theta(Tx_t, t, T\mathcal{P}) = R\, v_\theta(x_t, t, \mathcal{P}),
  \quad T = (R, \tau) \in \SE ,
  \label{eq:field-equivariance}
\end{equation}
which yields Eq.~\eqref{eq:planner-equivariance} whenever the prior transported
by the flow is itself invariant under the action. Nothing in the flow matching
objective enforces either one, and Sec.~\ref{sec:residual} measures how far a
trained field falls from Eq.~\eqref{eq:field-equivariance}.

\subsection{Decomposing the group}
\label{sec:decompose}

The motion planning task gives us a canonical frame at initialisation, and at the cost of one cross product per query. Unfortunately the canonicalization is not entirely complete, since fixing
the rotation about the first axis needs a second reference axis, and the query
does not give us one. We decompose as follows.

We declare $m = \tfrac{1}{2}(\svec + \gvec)$, $\ell = \|\gvec - \svec\| > 0$
and $\hat{u} = (\gvec - \svec)/\ell$. The second axis is not determined by the
query, and ours is read off the world basis vector $e_k$ that $\hat{u}$ is least
aligned with,
\begin{equation}
  k = \operatorname*{arg\,min}_{i} |\hat{u}_i| ,
  \qquad
  \hat{n} = \frac{\hat{u} \times e_k}{\|\hat{u} \times e_k\|} .
  \label{eq:second-axis}
\end{equation}
Choosing $k$ this way bounds $\hat{u}_k^2 \le \tfrac{1}{3}$, so
$\|\hat{u} \times e_k\|^2 = 1 - \hat{u}_k^2 \ge \tfrac{2}{3}$ and the denominator
stays away from zero for every query. Collecting the three axes into
$R_{\mathcal{P}} \in \mathrm{SO}(3)$, with
$R_{\mathcal{P}}^{\top} = [\,\hat{u}\;\; \hat{n}\;\; \hat{u} \times \hat{n}\,]$,
the reduction is the rigid map
\begin{equation}
  \Phi_{\mathcal{P}}(p) = R_{\mathcal{P}}\,(p - m),
  \label{eq:reduction}
\end{equation}
under which the endpoints are no longer inputs but constants,
\begin{equation}
  \Phi_{\mathcal{P}}(\svec) = \big({-\tfrac{\ell}{2}},\, 0,\, 0\big),
  \qquad
  \Phi_{\mathcal{P}}(\gvec) = \big({+\tfrac{\ell}{2}},\, 0,\, 0\big).
  \label{eq:endpoints}
\end{equation}

Hence the network never sees a world coordinate. The task is carried into the
frame, $\hat{x}_t = \Phi_{\mathcal{P}}(x_t)$ and
$\hat{\mathcal{O}} = \Phi_{\mathcal{P}}\mathcal{O}$, obstacle centres transforming
as points, box edges as free vectors under $R_{\mathcal{P}}$ alone, and radii not
at all. The field is evaluated entirely inside the frame,
$\hat{v} = v_\theta(\hat{x}_t, t \mid \ell, \hat{\mathcal{O}})$, where by
Eq.~\eqref{eq:endpoints} the query enters as the single scalar $\ell$ in place of
the six numbers of $(\svec, \gvec)$. The output is returned to the world by the
rotation alone, $v = R_{\mathcal{P}}^{\top}\hat{v}$, velocities being free
vectors.

Eq.~\eqref{eq:reduction} costs one cross product per query, constrains nothing
about $v_\theta$, and holds at initialisation. It fixes five of the six degrees
of freedom of $\SE$, three of translation through $m$ and two of rotation
through $\hat{u}$. It does not fix the sixth, and
Eq.~\eqref{eq:second-axis} reads the world basis, so it turns with the world.

\begin{proposition}[The residual is $\mathrm{SO}(2)$]
\label{prop:residual}
Let $R_1(\theta)$ denote the rotation by $\theta$ about the first coordinate
axis. For every task $\mathcal{P}$ with $\ell > 0$ and every $T \in \SE$ there is a
$\theta = \theta(T, \mathcal{P})$ with
\begin{equation}
  \Phi_{T\mathcal{P}} \circ T \;=\; R_1(\theta) \circ \Phi_{\mathcal{P}} .
  \label{eq:residual}
\end{equation}
\end{proposition}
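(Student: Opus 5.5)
The plan is to reduce Eq.~\eqref{eq:residual} to a statement about linear parts and then use that the first row of $R_{\mathcal{P}}$ is pinned by the query. Write $T = (R,\tau)$, so $Tp = Rp + \tau$. The transformed task has $T\svec$, $T\gvec$ as endpoints. Its midpoint is $m' = Rm + \tau$, its length is $\ell' = \ell$, and its direction is $\hat{u}' = R\hat{u}$. Hence
\[
\Phi_{T\mathcal{P}}(Tp) = R_{T\mathcal{P}}\big(Rp + \tau - Rm - \tau\big) = R_{T\mathcal{P}} R\,(p - m).
\]
The right-hand side of Eq.~\eqref{eq:residual} is $R_1(\theta)R_{\mathcal{P}}(p-m)$. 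Since $p - m$ ranges over all of $\mathbb{R}^3$, the claim is equivalent to the matrix identity
\[
Q := R_{T\mathcal{P}}\, R\, R_{\mathcal{P}}^{\top} = R_1(\theta)
\]
for some $\theta$. The translation part is therefore disposed of first and needs no further work.

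Next I show that $Q$ lies in $\mathrm{SO}(3)$ and fixes $e_1$. Membership in $\mathrm{SO}(3)$ is immediate, because $Q$ is a product of three rotations. The construction gives $R_{\mathcal{P}}^{\top} = [\,\hat{u}\;\; \hat{n}\;\; \hat{u}\times\hat{n}\,]$, so $R_{\mathcal{P}}^{\top} e_1 = \hat{u}$, and likewise $R_{T\mathcal{P}}\hat{u}' = e_1$. Therefore
\[
Q e_1 = R_{T\mathcal{P}} R \hat{u} = R_{T\mathcal{P}}\hat{u}' = e_1 .
\]
One detail to check is that $R_{\mathcal{P}}$ really lies in $\mathrm{SO}(3)$ and not merely $\mathrm{O}(3)$. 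Its columns (those of $R_{\mathcal{P}}^{\top}$) are orthonormal: $\hat{n}$ is a unit vector orthogonal to $\hat{u}$ by the cross product, and the third column is $\hat{u}\times\hat{n}$. Its determinant is $\hat{u}\cdot(\hat{n}\times(\hat{u}\times\hat{n})) = +1$. The well-definedness of $\hat{n}$ follows from the bound $\|\hat{u}\times e_k\|^2 \ge \tfrac{2}{3}$ already established. I would also note that the argmin in Eq.~\eqref{eq:second-axis} may select a different $k$ for $T\mathcal{P}$ than for $\mathcal{P}$. The argument is insensitive to this, because it uses only the first axis.

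The remaining step, which I expect to be the only real content, is the lemma that the stabiliser of $e_1$ in $\mathrm{SO}(3)$ is exactly $\{R_1(\theta)\}$. Since $Q$ is orthogonal and fixes $e_1$, it preserves the orthogonal complement $\mathrm{span}(e_2,e_3)$. So $Q = \mathrm{diag}(1, B)$ with $B \in \mathrm{O}(2)$, and $\det B = \det Q = 1$ forces $B \in \mathrm{SO}(2)$. Writing $B$ as a planar rotation by some angle $\theta$ gives $Q = R_1(\theta)$. The resulting $\theta$ depends on $T$ and $\mathcal{P}$ through $\hat{n}$ and $\hat{n}'$; explicitly, it is the angle taking $R R_{\mathcal{P}}^{\top} e_2 = R\hat{n}$ to $\hat{n}'$ within the plane orthogonal to $\hat{u}'$. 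This is consistent with the statement, which only asserts existence of $\theta(T,\mathcal{P})$ and makes no claim of continuity.
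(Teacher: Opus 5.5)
Your proof is correct and is essentially the paper's argument: the paper notes that $\Phi_{T\mathcal{P}}\circ T\circ\Phi_{\mathcal{P}}^{-1}$ is an orientation-preserving isometry fixing the two points $(\mp\tfrac{\ell}{2},0,0)$ of Eq.~\eqref{eq:endpoints}, hence a rotation about the first axis. Your steps (cancelling the translation through $m' = Rm+\tau$, showing $Qe_1 = e_1$, and checking $\det R_{\mathcal{P}}=+1$ and the stabiliser lemma by hand) unpack that one line, with one harmless sign slip in your closing aside: $Q=R_1(\theta)$ gives $R\hat{n}=\cos\theta\,\hat{n}'+\sin\theta\,(\hat{u}'\times\hat{n}')$, so with the plane oriented by $\hat{u}'$ the angle $\theta$ carries $\hat{n}'$ to $R\hat{n}$, not the reverse, which does not affect the existence claim.
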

\begin{proof}
$\|T\gvec - T\svec\| = \ell$, so by Eq.~\eqref{eq:endpoints} both
$\Phi_{T\mathcal{P}} \circ T$ and $\Phi_{\mathcal{P}}$ carry $\svec$ to
$(-\tfrac{\ell}{2},0,0)$ and $\gvec$ to $(+\tfrac{\ell}{2},0,0)$. Their composite
$Q = \Phi_{T\mathcal{P}} \circ T \circ \Phi_{\mathcal{P}}^{-1}$ is, therefore, an
orientation-preserving isometry fixing two distinct points, hence a rotation
about the line joining them, which is the first coordinate axis.
\end{proof}

Nothing in the proof refers to Eq.~\eqref{eq:second-axis}. The residual is
$\mathrm{SO}(2)$ for every rule selecting the second axis, and the rule decides
only which $\theta$. What Eq.~\eqref{eq:planner-equivariance} still asks of a
planner in this frame is equivariance to $\mathrm{SO}(2)$, then, and not to
nothing. One rule would escape it, a rule continuous in $\mathcal{P}$ and
returning the same $\hat{n}$ for $\mathcal{P}$ and $T\mathcal{P}$. Ours is not
continuous: it jumps where the minimising $k$ in Eq.~\eqref{eq:second-axis} ties.
Sec.~\ref{sec:obstruction} is why no rule does better.

Eq.~\eqref{eq:reduction} can be read as input normalisation rather than as a
group action, and the two readings name the same operation. The decomposition
invites it: the three translations alone are worth $+12.3$ points, and recentring
on $m$ contains no rotation. What the symmetry reading adds is not a different
computation but an account of which parts come off exactly, what is left when
they do, and why the remainder is $\mathrm{SO}(2)$ rather than a nuisance to be
tuned away.

\subsection{Why no continuous rule removes the remainder}
\label{sec:obstruction}
Sec.~\ref{sec:decompose} removed five degrees of freedom by reading a frame off
the query. Whether the sixth yields to the same treatment, under some better
choice of second axis, is not a question about our construction. Two obstructions
answer it, one topological and one that applies to an individual scene. Neither
is original to us. We state them because the rest of the paper is an empirical
answer to what they leave open.

\begin{proposition}[Topological obstruction]
\label{prop:two}
There is no continuous $\hat{n} : S^2 \to S^2$ with $\hat{n}(\hat{u}) \perp
\hat{u}$ for every $\hat{u} \in S^2$. Every rule selecting the second axis,
Eq.~\eqref{eq:second-axis} among them, is therefore discontinuous somewhere on
the sphere of query directions.
\end{proposition}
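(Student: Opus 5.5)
The plan is to reduce the statement to the hairy ball theorem~\cite{milnor1978hairy}. A continuous $\hat{n} : S^2 \to S^2$ with $\hat{n}(\hat{u}) \perp \hat{u}$ is, after the inclusion $S^2 \subset \mathbb{R}^3$, a continuous tangent vector field on $S^2$: at each point $\hat{u}$ the vector $\hat{n}(\hat{u})$ lies in the tangent plane $T_{\hat{u}}S^2 = \hat{u}^{\perp}$. It is also nowhere zero, since it takes values in $S^2$ and so has unit norm everywhere. The hairy ball theorem forbids exactly such a field on an even-dimensional sphere, so the first assertion follows once this identification is written down.

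To keep the argument self-contained rather than only cite it, I would give the standard homotopy proof. Suppose such an $\hat{n}$ exists and define
\[
H(\hat{u}, t) = \cos(\pi t)\,\hat{u} + \sin(\pi t)\,\hat{n}(\hat{u}), \qquad t \in [0,1].
\]
Orthogonality and unit length give $\|H\| = 1$, so $H$ is a homotopy in $S^2$ from the identity to the antipodal map $-\mathrm{id}$. The antipodal map on $S^2 \subset \mathbb{R}^3$ is a composition of three reflections, so it has degree $(-1)^3 = -1$, while the identity has degree $+1$. Since degree is a homotopy invariant, this is a contradiction. The one step that needs outside input is the degree computation, and I would cite it as standard rather than rederive homology.

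For the second sentence of the statement, any rule selecting the second axis, Eq.~\eqref{eq:second-axis} included, defines a map $\hat{u} \mapsto \hat{n}(\hat{u})$ that is unit and perpendicular to $\hat{u}$ on all of $S^2$. For Eq.~\eqref{eq:second-axis} this holds because the denominator is bounded below by $\sqrt{2/3}$, so the rule is defined for every query direction. By the first part such a map cannot be continuous, so it is discontinuous at some $\hat{u}$. Every $\hat{u} \in S^2$ arises as a query direction, since $\hat{u} = (\gvec - \svec)/\ell$ for suitable $\svec, \gvec$, so the discontinuity lies on the sphere of query directions.

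As a sanity check for the rule of Eq.~\eqref{eq:second-axis}, I would also exhibit the jump explicitly on a tie set of the $\arg\min$, for example near $\hat{u} = (1,1,0)/\sqrt{2}$ perturbed so that the minimising index switches. I do not expect any real obstacle; the only care needed is the identification of $\hat{n}$ with a nonvanishing tangent field.
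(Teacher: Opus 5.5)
Your proof is correct and follows the paper's route: the paper's proof is exactly your identification of $\hat{n}$ with a continuous nowhere-vanishing tangent field followed by a citation of the hairy ball theorem, and your homotopy from $\mathrm{id}$ to $-\mathrm{id}$ with the degree count only makes that citation self-contained. One slip in your optional sanity check: at $\hat{u} = (1,1,0)/\sqrt{2}$ the index $k=3$ is the strict minimiser on a whole neighbourhood, since the tie there is between the two largest components, so no jump occurs. The argmin ties where the two smallest $|\hat{u}_i|$ coincide, for example at $\hat{u} = e_1$, where perturbing to $(1,\epsilon,0)$ and to $(1,0,\epsilon)$ gives $\hat{n} \approx -e_2$ and $\hat{n} \approx e_3$ respectively.
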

\begin{proof}
Such an $\hat{n}$ is a continuous nowhere-vanishing tangent vector field on
$S^2$, which the hairy ball theorem forbids~\cite{milnor1978hairy}.
\end{proof}

\begin{proposition}[Scenes with their own symmetry]
\label{prop:three}
Let $\hat{\mathcal{O}}$ be a reduced scene with
$R_1(\pi)\,\hat{\mathcal{O}} = \hat{\mathcal{O}}$. Then no rule reading only the
reduced task can distinguish the frame $R_{\mathcal{P}}$ from
$R_1(\pi)\,R_{\mathcal{P}}$.
\end{proposition}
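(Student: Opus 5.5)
The plan is to show that the two frames $R_{\mathcal{P}}$ and $R' = R_1(\pi)\,R_{\mathcal{P}}$ produce literally the same reduced task. No function of the reduced task can then tell them apart. I will make precise what "reading only the reduced task" means: the rule receives as input the data the network sees under Eq.~\eqref{eq:reduction}, namely the scalar $\ell$, the reduced endpoints of Eq.~\eqref{eq:endpoints} and the reduced scene $\hat{\mathcal{O}}$. Any such rule $F$ is a function on reduced tasks, so it suffices to show that the reduced task is identical under both frames.

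\textbf{Step 1: the alternative frame is legitimate.} Define $\Phi'(p) = R'(p - m) = R_1(\pi)\,\Phi_{\mathcal{P}}(p)$. Since $R_1(\pi) \in \mathrm{SO}(3)$, $R'$ is again a rotation. Its first row is $\hat{u}$, because $R_1(\pi)$ fixes $e_1$. So $\Phi'$ is the frame obtained by choosing the second axis $-\hat{n}$ instead of $\hat{n}$. This fits the remark after Proposition~\ref{prop:residual}: every second-axis rule yields a frame of this form, differing by some $R_1(\theta)$.

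\textbf{Step 2: the reduced data coincide.} The length $\ell$ does not depend on the frame. The endpoints $(\mp\tfrac{\ell}{2},0,0)$ lie on the first axis, which $R_1(\pi)$ fixes pointwise, so they are unchanged. For the scene, $\Phi'\mathcal{O} = R_1(\pi)\,\Phi_{\mathcal{P}}\mathcal{O} = R_1(\pi)\hat{\mathcal{O}} = \hat{\mathcal{O}}$ by hypothesis. Here one should check that the action on obstacle parameters is consistent: centres transform as points, box edges as free vectors under the rotation part alone, and radii not at all. This holds because the composite map $R_1(\pi)\circ\Phi_{\mathcal{P}}$ is itself a rigid map whose rotation part is $R_1(\pi)R_{\mathcal{P}}$. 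Consequently the two reduced tasks are equal as inputs, and $F$ returns the same output for both.

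\textbf{Step 3, the main point of care: what "distinguish" means.} I expect the delicate part to be fixing this notion rather than any computation. A rule that wanted to prefer $R_{\mathcal{P}}$ over $R'$ must assign the two frames different outputs, or single out one of them. Its only input is the reduced task, which is identical in both cases, so any preference it expresses is a constant choice that does not depend on which frame was actually applied. In particular, no rule can canonicalise the residual $R_1(\pi)$ away. Moreover, an $\mathrm{SO}(2)$-equivariant field on such a scene must satisfy $\hat{v}(R_1(\pi)\hat{x}) = R_1(\pi)\hat{v}(\hat{x})$. This last statement is an optional corollary, not a step the proposition needs.
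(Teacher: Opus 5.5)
Your proposal is correct and follows the paper's own argument: $R_1(\pi)$ fixes both reduced endpoints on the first axis and fixes $\hat{\mathcal{O}}$ by hypothesis, so the two frames present any rule with identical arguments. Your Steps~1 and~3 (that the flipped frame is the one with second axis $-\hat{n}$, that the obstacle parameters transform consistently under the composite rigid map, and what ``distinguish'' means) make explicit what the paper leaves implicit; they do not change the route.
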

\begin{proof}
$R_1(\pi)$ fixes both endpoints of Eq.~\eqref{eq:endpoints} and fixes
$\hat{\mathcal{O}}$ by hypothesis, so the two frames present the rule with
identical arguments.
\end{proof}

Neither proposition rules out a discontinuous rule that reads the
scene~\cite{kaba2023canon}, and outside the measure-zero set of
Prop.~\ref{prop:three} such a rule can fix the roll. What we show is that the
remainder cannot be removed the way the other five degrees of freedom were, in
closed form from the query and continuously. Sec.~\ref{sec:results} measures the
three mechanisms that remain, and none of them recovers a point.

\subsection{Three ways to impose the symmetry}
\label{sec:threeways}
A symmetry that the problem has and the model lacks can be imposed in three
places. In the data, by training on randomly transformed copies of each problem,
which teaches the symmetry without guaranteeing it. In the inference operator, by
averaging the learned field over the group, which guarantees it for the averaged
field without changing what was learned. In the weights, by restricting the
hypothesis class to fields equivariant by construction, which guarantees it at
initialisation and constrains optimisation throughout.

Averaging a field $f$ over the $K$ rolls $\theta_k = 2\pi k / K$ gives
\begin{equation}
  (\mathcal{A}f)(\hat{x}) = \frac{1}{K} \sum_{k=0}^{K-1}
  R_1(\theta_k)^{\top} f\big(R_1(\theta_k)\,\hat{x}\big) ,
  \label{eq:frame-averaging}
\end{equation}
the orthogonal projection of $f$ onto the fields equivariant to the cyclic
subgroup $C_K \subset \mathrm{SO}(2)$~\cite{puny2022fa}.

The first two mechanisms act on a field that has already been trained, and how
far such a field sits from equivariant is measurable rather than assumed. We
write
\begin{equation}
  r \;=\; \frac{\|f - \mathcal{A}f\|}{\|\mathcal{A}f\|}
  \label{eq:residual-def}
\end{equation}
for the non-equivariance residual of $f$, which costs one forward pass per
rotation and no retraining. It bounds what the second mechanism can do.

\begin{proposition}[Symmetrisation budget]
\label{prop:budget}
Let $f^\ast$ be equivariant and $\mathcal{A}$ the projection of
Eq.~\eqref{eq:frame-averaging}. Then
\begin{equation}
  \|f - f^\ast\|^2 = \|\mathcal{A}f - f^\ast\|^2 + r^2\|\mathcal{A}f\|^2 .
  \label{eq:budget}
\end{equation}
The squared error that any equivariantisation of $f$ can remove is at most
$r^2\|\mathcal{A}f\|^2$, and $\mathcal{A}$ attains it. The remainder
$\|\mathcal{A}f - f^\ast\|$ is equivariant error, which no symmetrisation of $f$
reduces. Every map carrying $f$ to an equivariant field moves it by at least
$r\|f\|/\sqrt{1+r^2}$, with equality only for $\mathcal{A}$.
\end{proposition}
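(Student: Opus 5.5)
The plan is to work in the Hilbert space of square-integrable fields on the reduced domain, with the inner product $\langle f, g\rangle = \mathbb{E}\,[f(\hat{x})^{\top} g(\hat{x})]$. We take the expectation over a measure $\mu$ on $\hat{x}$ that is invariant under the rolls $R_1(\theta_k)$; the norm in Eq.~\eqref{eq:budget} is understood to be this one. The first step is to verify that $\mathcal{A}$ of Eq.~\eqref{eq:frame-averaging} is an orthogonal projection onto the closed subspace $E$ of $C_K$-equivariant fields, as cited from~\cite{puny2022fa}. We check this directly. Each operator $(\rho_k f)(\hat{x}) = R_1(\theta_k)^{\top} f(R_1(\theta_k)\hat{x})$ is unitary, because $R_1(\theta_k)$ is orthogonal and $\mu$ is invariant. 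The $\rho_k$ form a group representation, and $\mathcal{A}$ is their average. Hence $\mathcal{A}^2 = \mathcal{A}$, since averaging a group average returns it. Also $\mathcal{A}^{*} = \mathcal{A}$, since $\rho_k^{*} = \rho_k^{-1} = \rho_{-k}$ and the sum is reindexed. Finally, $\mathcal{A}f = f$ exactly when $f \in E$.

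The identity then follows from Pythagoras. Write $f - f^\ast = (f - \mathcal{A}f) + (\mathcal{A}f - f^\ast)$. Since $f^\ast$ is equivariant under all of $\mathrm{SO}(2)$, it lies in $E$, so $\mathcal{A}f - f^\ast \in E$. Meanwhile $f - \mathcal{A}f = (I - \mathcal{A})f \in E^\perp$. The cross term therefore vanishes, and
\[
\|f - f^\ast\|^2 = \|\mathcal{A}f - f^\ast\|^2 + \|f - \mathcal{A}f\|^2 .
\]
Substituting $\|f - \mathcal{A}f\| = r\|\mathcal{A}f\|$ from Eq.~\eqref{eq:residual-def} gives Eq.~\eqref{eq:budget}.

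For the removal claim, we read an equivariantisation as any map $f \mapsto g$ with $g \in E$. For any such $g$, the same decomposition gives
\[
\|f - f^\ast\|^2 - \|g - f^\ast\|^2 \le \|f - f^\ast\|^2 - \|\mathcal{A}f - f^\ast\|^2 + \bigl(\|\mathcal{A}f - f^\ast\|^2 - \|g - f^\ast\|^2\bigr).
\]
This is the step I expect to be the main obstacle. An arbitrary $g \in E$ could land closer to $f^\ast$ than $\mathcal{A}f$ does, for instance $g = f^\ast$ itself. So the bound only holds for maps that use $f$ alone and not knowledge of $f^\ast$. My approach is to state the claim uniformly over $f^\ast$: the guaranteed reduction, the worst case over $f^\ast \in E$, is $\inf_{f^\ast}\bigl(\|f-f^\ast\|^2 - \|g-f^\ast\|^2\bigr)$. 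Taking $f^\ast = g + t\,(g - \mathcal{A}f)$ and letting $t \to \infty$ drives this below $r^2\|\mathcal{A}f\|^2$ unless $g = \mathcal{A}f$. The reduction equals $r^2\|\mathcal{A}f\|^2$ for every $f^\ast$ exactly when $g = \mathcal{A}f$, by the identity above. The term $\|\mathcal{A}f - f^\ast\|$ depends only on $\mathcal{A}f$, so no post-hoc operator acting through $\mathcal{A}f$ changes it; this is the sense in which it is equivariant error.

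The last claim is best-approximation. For $g \in E$,
\[
\|f - g\|^2 = \|f - \mathcal{A}f\|^2 + \|\mathcal{A}f - g\|^2 \ge \|f - \mathcal{A}f\|^2 ,
\]
with equality iff $g = \mathcal{A}f$. It remains to express $\|f - \mathcal{A}f\|$ through $\|f\|$. Orthogonality gives $\|f\|^2 = \|\mathcal{A}f\|^2 + \|f-\mathcal{A}f\|^2 = (1+r^2)\|\mathcal{A}f\|^2$. Hence
\[
\|f - \mathcal{A}f\| = r\|\mathcal{A}f\| = \frac{r\|f\|}{\sqrt{1+r^2}} ,
\]
which completes the plan.
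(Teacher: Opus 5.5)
Your core argument matches the paper's. The paper's proof is the single line that $\mathcal{A}$ is an orthogonal projection, so $f-\mathcal{A}f\perp E\ni\mathcal{A}f-f^\ast$, and Eq.~\eqref{eq:budget} is then Pythagoras. Your additions are correct and supply what that line leaves implicit. You take a roll-invariant measure $\mu$, which is needed for $\mathcal{A}$ to be orthogonal rather than merely idempotent. For the last claim you use $\|f-g\|^2=\|f-\mathcal{A}f\|^2+\|\mathcal{A}f-g\|^2$ together with $\|f\|^2=(1+r^2)\|\mathcal{A}f\|^2$. You are also right that the ``at most'' claim is false for arbitrary maps, for instance $g=f^\ast$. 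It needs a reading such as your worst case over $f^\ast\in E$, and the paper's proof does not address this.

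The one step that fails as written is your witness for that worst case. With $f^\ast=g+t(g-\mathcal{A}f)$ you have $\mathcal{A}f-f^\ast=-(1+t)(g-\mathcal{A}f)$ and $g-f^\ast=-t(g-\mathcal{A}f)$. Your own identity then gives
\[
\|f-f^\ast\|^2-\|g-f^\ast\|^2 \;=\; r^2\|\mathcal{A}f\|^2+(1+2t)\,\|g-\mathcal{A}f\|^2 .
\]
This exceeds the budget for every $t>-\tfrac12$ and tends to $+\infty$ as $t\to\infty$. Placing $f^\ast$ far out beyond $g$ makes $g$ look better, not worse, so the sign is reversed.

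You need $t\to-\infty$, which puts $f^\ast$ beyond $\mathcal{A}f$ on the side away from $g$. More simply, take $t=-1$, i.e.\ $f^\ast=\mathcal{A}f$. The reduction is then $r^2\|\mathcal{A}f\|^2-\|g-\mathcal{A}f\|^2$, strictly below the budget whenever $g\neq\mathcal{A}f$. A cleaner way to see the whole claim: for $f^\ast\in E$ the reduction equals $\|f\|^2-\|g\|^2-2\langle\mathcal{A}f-g,\,f^\ast\rangle$. This is affine in $f^\ast$ along a direction lying in $E$, so its infimum is $-\infty$ unless $g=\mathcal{A}f$, and it is identically $r^2\|\mathcal{A}f\|^2$ when $g=\mathcal{A}f$.

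Your displayed ``$\le$'' is in fact an identity, obtained by adding and subtracting $\|\mathcal{A}f-f^\ast\|^2$. With the sign corrected, the proposal is a complete and somewhat more careful version of the paper's argument.
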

\begin{proof}
$\mathcal{A}$ is an orthogonal projection, so $f - \mathcal{A}f$ is orthogonal to
the equivariant subspace, which contains $\mathcal{A}f - f^\ast$.
\end{proof}

\section{Setup and Calibration}
\subsection{Benchmark and protocol}

PointMass3D places forty obstacles, twenty spheres and twenty oriented boxes, in
$[-1,1]^3$. The robot is a sphere of radius $\rho = 0.03$. Collision checking uses an
analytic signed distance field, and $\gamma(x)$ is resampled at a spacing of
$0.01$ before it is tested rather than evaluated at the $64$ waypoints alone.
Since $d_{\mathcal{O}}$ is $1$-Lipschitz, every point of $\gamma(x)$ lies within
$0.005$ of a tested sample, so the reported rates differ from the continuous ones
by at most a $0.005$ perturbation of $\rho$, a sixth of the robot radius. A trajectory
is $N=64$ waypoints in $\mathbb{R}^3$.

Expert paths come from RRT-Connect, then random shortcutting, uniform
resampling, and CHOMP refinement. The dataset is $300$ environments, each with
$600$ start-goal pairs and $30$ paths per pair including time reversals.
Environments $250$ to $299$ are held out and never trained on. Unless stated
otherwise, models train on the first $60$ of the remaining $250$; the scaling
study in Sec.~\ref{sec:scaling} varies that count.

The planner is a conditional flow-matching model with $2.16$M parameters: a
dilated temporal convolutional trunk~\cite{oord2016wavenet} with FiLM
conditioning~\cite{perez2018film}
and a PointNet-style obstacle encoder~\cite{qi2017pointnet}. The encoder
max-pools all forty obstacles into a single $128$-dimensional vector, and FiLM
applies that vector identically to every one of the $64$ waypoints.

Every model is evaluated on the same $500$ problems: $50$ unseen environments,
$10$ distinct start-goal pairs each, $20$ samples per pair, eight Euler steps. We
report two metrics and never conflate them. A sample is \emph{free} if the whole
path is collision-free. \emph{Best-of-20} asks whether any of the twenty samples
for a query is free. They differ by about thirty points. Uncertainty is quoted
across seeds for claims about a method. For comparisons against a fixed baseline
measured on the same problems we use a paired bootstrap over environments.

\begin{table}[t]
\centering
\footnotesize
\setlength{\tabcolsep}{3.5pt}
\begin{tabular}{@{}lrrr@{}}
\toprule
Planner & Free (\%) & Clearance & s/query \\
\midrule
straight line                   & 15.6 & 0.055 & $<$0.001 \\
TrajOpt                         & 74.0 & 0.026 & 0.352 \\
CHOMP                           & 88.8 & 0.019 & 0.394 \\
RRT-Connect                     & 99.6 & 0.017 & 0.284 \\
expert (RRT+CHOMP)              & 100.0 & 0.037 & 0.342 \\
\midrule
flow, world frame               & 14.60 & $-0.071$ & 0.053 \\
flow, $(\svec,\gvec)$ reduction & \textbf{51.10} & $-0.002$ & 0.063 \\
\bottomrule
\end{tabular}
\caption{The same $500$ unseen problems solved five other ways. The learned rows
are converged, three-seed means at $60$ environments. Classical timings are one
CPU core and one query; the learned rows are one GPU and twenty samples, so the
time column is not like-for-like. Clearance is $d_{\mathcal{O}} - \rho$, the
margin by which the robot misses the nearest obstacle, minimised along the
densely resampled path; it is negative exactly when the path collides. Classical
rows average it over solved problems only, learned rows over all samples, which
is why the straight line reads $+0.055$ while failing $84.4\%$ of them. The
expert row is the filtered training distribution rather than a planner that never
fails.}
\label{tab:floors}
\end{table}

\subsection{The floor, and a paired test against it}
A learned planner is worth what it adds to the cheapest thing that solves the
same problems. Joining start to goal with a straight segment is collision-free on
$15.6\%$ of the $500$ held-out problems, so the clutter is dense and not so dense
that the direct path is usually blocked. The world-frame model converges to
$14.60 \pm 0.20\%$.

Quoting the across-seed spread here would answer a different question. That
spread describes variation between training runs, not the uncertainty of a
comparison against a fixed baseline measured on the identical problems. Both
quantities track per-environment clutter, so we resample environments and
recompute the difference within each replicate. The paired bootstrap gives
$-1.02$ points, a $95\%$ interval of $[-3.06, +0.90]$, and a standard error of
$1.01$, less than half the $2.2$ obtained by resampling either rate alone, and
still spanning zero. What we defend is therefore that training on world
coordinates does not beat joining the endpoints with a line, and not that it
loses to one. At $20$ epochs even that was unavailable ($15.37 \pm 0.36$).
Optimisation is what makes the comparison resolvable, not what creates the
deficit.

None of this is what the planner is worth to a practitioner. Behind a collision
check, best-of-$20$, the reduced model solves $87.2\%$ of held-out queries at $60$
environments against $15.8\%$ for the best untrained prior. That is the number to
read if the question is whether the sampler is useful rather than what the
representation buys, subject to Table~\ref{tab:floors}.

\section{Results}
\label{sec:results}
\subsection{Scaling}
\label{sec:scaling}
Across a $12.5\times$ increase in training environments the world-frame model
goes from $12.9$ to $15.4$. The reduction goes from $19.2$ to $45.5$, and has
not plateaued. On this grid the frame is worth $+30.1$ points, splitting into
$+12.3$ for the three translations and $+18.0$ for the two rotations. Seed-matched, the difference grows
monotonically at every size: $+6.37 \pm 0.17$, $+20.87 \pm 1.54$,
$+25.85 \pm 0.51$ and $+30.08 \pm 0.44$, quoting the standard error of the
per-seed differences rather than combining the two arms' spreads in quadrature.

The world-frame model is not short of data. Training it on random $\SE$ motions
of the whole problem, which is the data mechanism of Sec.~\ref{sec:threeways},
moves it by at most $2.1$ points and leaves it at $17.5$. These cells hold the
budget at $20$ epochs so that representation is the only thing varying, which is
why $45.5$ at $250$ environments sits below the $51.10$ that
Sec.~\ref{sec:localgeom} reports at $60$. Trained out, the $250$-environment
model reaches $55.8$.

\begin{figure}[t]
\centering
\includegraphics[width=\columnwidth]{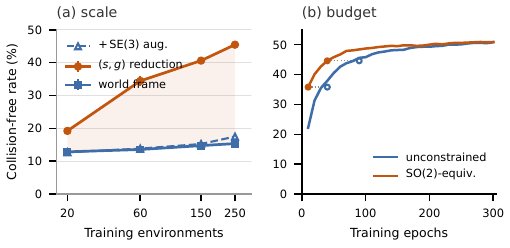}
\caption{\textbf{(a) Held-out collision-free rate against training-set size,
at a fixed budget of $20$ epochs.} The world-frame model moves $2.5$ points
across a $12.5\times$ increase in environments, from $12.9$ to $15.4$, while the
reduction runs from $19.2$ to $45.5$ without plateauing. Holding the budget fixed
is what makes representation the only variable, and it is also why these are not
convergence levels: trained out, the $250$-environment reduced model reaches
$55.8$. Every world-frame and reduction cell is a mean over three seeds.
\textbf{(b) The same rate against training epochs, $60$ environments.} The
SO(2)-equivariant backbone reaches any given level two to three times sooner than the
unconstrained model in the same frame, and the two converge to within $0.15$
points. Seed-matched over three seeds.}
\label{fig:scaling}
\end{figure}

\subsection{Control: a query-conditioned encoder}
\label{sec:querycond}
The gap admits a deflationary reading. In the world frame the obstacle encoder
sees obstacles and nothing else, so its output is constant across every query in
an environment, at a measured within-environment standard deviation of
$0.000000$. That is a property of the architecture rather than of training. Under
the reduction the same encoder sees those obstacles expressed in the query's own
frame, so its code varies with the query. On this reading the $+36.5$ measures
letting the encoder see the query, and the change of frame is incidental.

We built the model that separates the two. It stays in the world frame
throughout, and the raw query is concatenated to every obstacle before the pool,
so the scene code depends on the query with no change of frame anywhere. It
carries $0.07\%$ more parameters.

It converges to $15.59 \pm 0.38\%$ against the world frame's
$14.60 \pm 0.20$, seed-matched $+1.01$ and positive on all three seeds. Query
dependence is worth about one point of the $36.5$. The level it reaches is the
more useful number. At $15.59$ it lands on the straight line's $15.6$, so showing
the encoder the query is enough to reach the floor and not enough to pass it.

\subsection{Control: local geometry, and how it scales}
\label{sec:localgeom}
One scene code applied identically to all $64$ waypoints leaves a waypoint no
channel through which to ask what is near \emph{it}, and collision avoidance is
that question. Sec.~\ref{sec:querycond} shows the reduction addresses this
indirectly. It can also be addressed directly. We append to each waypoint the
signed distance to the scene and its gradient there, which for the SO(2)-equivariant backbone splits by irreducible representation: $d_{\mathcal{O}}$ and the axial
component of $\nabla d_{\mathcal{O}}$ are invariant, the normal component
rotates. This is a featurisation and not an oracle, since the signed distance is
a deterministic function of primitives the network already receives. What it
assumes is that those primitives are available at inference, which holds on this
benchmark and fails for a planner operating on perception.

Table~\ref{tab:twobytwo} crosses the two. Local geometry is worth $+40.0$ points
in the world frame, more than the frame itself is worth, so the representation
change studied here is not the largest effect available on this benchmark. The
two are not substitutes either: with local geometry supplied, the frame still
adds $+24.8$. They are sub-additive. Treating them as independent predicts
$14.60 + 36.5 + 40.0 = 91.1$ and the measured value is $79.37$, because both
address the same difficulty by different routes.

What the frame is worth beside local geometry decays with data. Differencing
within seed at a matched $80$-epoch budget, it is $+22.37 \pm 0.46$ points at
$60$ environments and $+17.24 \pm 0.17$ at $250$, two seeds at each scale. The
$5.1$-point decay is an order of magnitude larger than either standard error.
Local geometry is what scales over that range, gaining $11$ points while the
frame beside it loses $5$.

\begin{table}[t]
\centering
\footnotesize
\setlength{\tabcolsep}{3.5pt}
\begin{tabular}{@{}lrrr@{}}
\toprule
$60$ envs, converged & global only & $+$ local & geometry adds \\
\midrule
world frame                    & 14.60 & 54.54 & $+40.0$ \\
$(\svec,\gvec)$ reduction      & 51.10 & 79.37 & $+28.3$ \\
\quad $+$ equivariant backbone & 51.27 & \textbf{82.13} & $+30.9$ \\
\midrule
\emph{the frame contributes}   & $+36.5$ & $+24.8$ & \\
\bottomrule
\end{tabular}
\caption{Local geometry against the query frame. Every cell is a mean over two or
three seeds. The two local-geometry columns are the least seed-sensitive
measurements in this work, with across-seed standard errors of $0.01$ and $0.10$
against $0.20$ and $0.25$ without. The equivariant row is a seed-matched pair,
$51.27$ against $51.12$ on the same three seeds, so what that backbone is worth
is the $+0.15$ between them and not $51.27$ minus the three-seed mean $51.10$
above it.}
\label{tab:twobytwo}
\end{table}

\subsection{Replications}
\label{sec:replications}
Three things vary, one at a time. A second state space: an L-shaped rigid body in
the same clutter, whose state is a full pose rather than a point, so the
points-versus-vectors distinction moves inside the trajectory. Its trivial
baseline is geodesic interpolation, straight in position and slerp in rotation.
The world-frame model never clears that floor at either budget; the reduction
clears it and is still climbing. A second generative objective: replacing flow
matching with DDPM at matched network evaluations leaves the gap at $+37.01$
against $+36.50$, so the effect does not belong to the sampler. A range of
clutter: on freshly generated layouts at densities neither model was trained on,
the gap runs from $+29.4$ at eight obstacles to $+41.2$ at twenty-four.

The rigid-body cells are the weakest measurements in this paper. One seed, and
both models below their own floor at $20$ epochs. What replicates there is the
ratio and not the level: at matched budgets the reduced model is $2.9\times$ the
world-frame model in both domains.

\begin{table}[t]
\centering
\footnotesize
\setlength{\tabcolsep}{3.5pt}
\begin{tabular}{@{}llrrr@{}}
\toprule
Replication & Condition & World & Reduced & Gap \\
\midrule
\multicolumn{5}{l}{\textbf{Second state space.} $\SE$ rigid body, floor $4.8$}\\
            & $20$ ep.       & 2.8 & 6.2 & $+3.4$ \\
            & $155$/$159$ ep.& 3.1 & \textbf{10.0} & $+6.9$ \\
\midrule
\multicolumn{5}{l}{\textbf{Other objective.} DDPM, matched network evaluations}\\
            & $20$ ep.       & 12.27 & 28.97 & $+16.71$ \\
            & $300$ ep.      & 13.51 & 50.52 & $\mathbf{+37.01}$ \\
\midrule
\bottomrule
\end{tabular}
\caption{The effect survives a change of state space, and a change of generative
objective. The rigid body is an L-shaped union
of five spheres whose state is a full pose, chosen to have trivial symmetry so
the domain is not secretly easier; both models are below their own floor at $20$
epochs and only the reduced one clears it. The second column is each model's
own best-validation checkpoint, epoch $155$ for the world frame and $159$ for the
reduction, so neither had stopped improving when the runs ended. Single seed except the DDPM $20$-epoch row.}
\label{tab:replications}
\end{table}

\subsection{The residual, and what it bounds}
\label{sec:residual}
Proposition~\ref{prop:budget} bounds what symmetrising a trained field can remove. The reduced
model has $r = 0.017$. That bound is under $3 \times 10^{-4}$ of the field's
squared magnitude, and the model fails on $48.9\%$ of its samples. The two
quantities are not the same order, so no symmetrisation of this field closes
that gap. The measurement costs one forward pass per rotation and no
retraining.

The residual does not forecast what a mechanism is worth. Measured under the
full $\mathrm{SE}(3)$ action, which makes it defined for the world-frame model
as well, augmentation drives $r$ from $0.0881$ to $0.0181$ and buys $2.1$
points. The reduction, at $0.0174$, buys $30.1$. Both residuals are single
measurements on one checkpoint each, since $r$ is a property of a particular
trained field, and they differ by less than the gap between the $K=9$ and
$K=32$ estimates of either.

\section{Limitations}

\begin{itemize}\itemsep3pt
  \item \textbf{The group must act on the state space, not just the workspace.}
  This holds for a point mass and for the rigid body of Table~\ref{tab:replications}.
  It fails for a manipulator planning in joint space: the query still defines a
  frame in the workspace, but a rigid motion of the world induces no clean action
  on joint angles. Mobile bases, free-flying bodies and end-effector-space
  planning satisfy the condition. Joint-space planning for arms does not.

  \item \textbf{The benchmark was chosen for isolability, not for difficulty.}
  RRT-Connect solves $99.6\%$ of these problems in under $0.3$\,s on one CPU
  core, and on this benchmark one should use RRT-Connect. A point mass in a box
  is where clutter, budget, seed and representation can be varied one at a time,
  which is the only reason any number here is attributable to the representation.
  The sampler is not a toy at that: behind a collision check it solves $87.2\%$
  of held-out queries.

  \item \textbf{One workspace family, one clutter generator, one architecture.}
  The claim that the sixth degree of freedom needs no treatment relies on the
  training distribution supplying implicit roll diversity through many start-goal
  directions. A dataset with few directions per environment could reverse it.

  \item \textbf{One cell of the design was not run.} The SO(2)-equivariant backbone
  was only ever trained inside the reduced frame, so what we establish is that
  architectural equivariance adds nothing once the frame is present. That is not
  the same as showing it cannot supply what the frame supplies. A world-frame
  equivariant model would separate the two readings, and it is the experiment we
  would run next.

  \item \textbf{Seeds are not uniform.} The $60$-environment cells carry three
  seeds, the local-geometry cells two or three, and the $250$-environment grid
  three. Only the converged $250$-environment run and the rigid-body cells of
  Table~\ref{tab:replications} carry one. Every claim we lean on has at least two.

  \item \textbf{The frame's value beside local geometry falls with data.} It is
  $+22.4$ points at $60$ environments and $+17.2$ at $250$ on a matched budget.
  We have not found where that decay stops, and a reader extrapolating to much
  larger datasets should expect the margin to narrow rather than hold.

  \item \textbf{Local geometry assumes explicit primitives at inference.} The
  signed distance is a deterministic function of obstacle data the network
  already receives, so it is a featurisation rather than an oracle, but it is
  unavailable to a planner working from perception.
\end{itemize}

\section{Conclusion}
Five of the six degrees of freedom of $\mathrm{SE}(3)$ were removable exactly,
at initialisation and at the cost of one cross product per query, by a frame the planning query already
supplied. That was worth $+36.5$ points of held-out collision-free rate,
against a world-frame model that did not beat a straight line. The sixth was
not removable by any continuous rule. Frame averaging and the SO(2)-equivariant backbone were trained to convergence, and neither recovered a point. Roll
augmentation was measured only at a fixed budget, where its $+0.8$ already sat
inside the spread across three seeds. The SO(2)-equivariant backbone still earned
its place, reaching any given level two to three times sooner, so the symmetry
bought optimisation and not accuracy. Two models with indistinguishable
residuals differed by $28$ points, so the residual was not sufficient to
predict what a mechanism was worth. Giving each waypoint its distance and
direction to the nearest obstacle was worth more than the frame was, and with
that information supplied the frame still added $+24.8$ points.

The construction transfers wherever the group acts on the state space and the
query names two points, which covers mobile bases, free-flying bodies and
end-effector-space planning, and excludes joint-space planning for arms. It
costs one cross product per query and constrains nothing about the network. The
payoff should be largest where relearning the same motion at every pose is
most expensive, which means a model small relative to its workspace and a
narrow conditioning path from the scene to the trajectory. Both were true here,
and the $+36.5$ points shrank to $+24.8$ as soon as we widened that path with
local geometry.

\section*{Acknowledgements}

The author thanks the IWIN-FINS Laboratory at Shanghai Jiao Tong
University for the computational resources used in this work.

The software implementation was carried out with Claude Code under the
author's direction. The classical planner implementations, used to
generate expert trajectories and to bound what is achievable, follow the
published
algorithms~\cite{kuffner2000rrtconnect,schulman2014trajopt,zucker2013chomp}
and their public reference implementations. The problem formulation, the
theoretical analysis and the experimental design are the author's own.
Claude was used for language and grammar editing of the manuscript.
Every reported number is produced by the released code, whose geometric
and equivariance-critical components are verified by property-based tests
that run on untrained networks and include negative controls.


\small
\begin{thebibliography}{99}

\bibitem{carvalho2023mpd}
J.~Carvalho, A.~T. Le, M.~Baierl, D.~Koert, and J.~Peters.
\newblock Motion planning diffusion: Learning and planning of robot motions with
  diffusion models.
\newblock In {\em IROS}, 2023.

\bibitem{chi2023dp}
C.~Chi, S.~Feng, Y.~Du, Z.~Xu, E.~Cousineau, B.~Burchfiel, and S.~Song.
\newblock Diffusion policy: Visuomotor policy learning via action diffusion.
\newblock In {\em RSS}, 2023.

\bibitem{cohen2016gcnn}
T.~S. Cohen and M.~Welling.
\newblock Group equivariant convolutional networks.
\newblock In {\em ICML}, 2016.

\bibitem{deng2021vn}
C.~Deng, O.~Litany, Y.~Duan, A.~Poulenard, A.~Tagliasacchi, and L.~Guibas.
\newblock Vector neurons: A general framework for {SO(3)}-equivariant networks.
\newblock In {\em ICCV}, 2021.

\bibitem{elesedy2021strict}
B.~Elesedy and S.~Zaidi.
\newblock Provably strict generalisation benefit for equivariant models.
\newblock In {\em ICML}, 2021.

\bibitem{fishman2022mpinets}
A.~Fishman, A.~Murali, C.~Eppner, B.~Peele, B.~Boots, and D.~Fox.
\newblock Motion policy networks.
\newblock In {\em CoRL}, 2022.

\bibitem{fuchs2020se3t}
F.~B. Fuchs, D.~E. Worrall, V.~Fischer, and M.~Welling.
\newblock {SE(3)}-transformers: 3{D} roto-translation equivariant attention
  networks.
\newblock In {\em NeurIPS}, 2020.

\bibitem{gruver2023lie}
N.~Gruver, M.~Finzi, M.~Goldblum, and A.~G. Wilson.
\newblock The {L}ie derivative for measuring learned equivariance.
\newblock In {\em ICLR}, 2023.

\bibitem{janner2022diffuser}
M.~Janner, Y.~Du, J.~B. Tenenbaum, and S.~Levine.
\newblock Planning with diffusion for flexible behavior synthesis.
\newblock In {\em ICML}, 2022.

\bibitem{kaba2023canon}
S.-O. Kaba, A.~K. Mondal, Y.~Zhang, Y.~Bengio, and S.~Ravanbakhsh.
\newblock Equivariance with learned canonicalization functions.
\newblock In {\em ICML}, 2023.

\bibitem{kuffner2000rrtconnect}
J.~J. Kuffner and S.~M. LaValle.
\newblock {RRT}-connect: An efficient approach to single-query path planning.
\newblock In {\em ICRA}, 2000.

\bibitem{lipman2023fm}
Y.~Lipman, R.~T.~Q. Chen, H.~Ben-Hamu, M.~Nickel, and M.~Le.
\newblock Flow matching for generative modeling.
\newblock In {\em ICLR}, 2023.

\bibitem{milnor1978hairy}
J.~Milnor.
\newblock Analytic proofs of the ``hairy ball theorem'' and the {B}rouwer fixed
  point theorem.
\newblock {\em Amer. Math. Monthly}, 1978.

\bibitem{oord2016wavenet}
A.~van~den Oord, S.~Dieleman, H.~Zen, K.~Simonyan, O.~Vinyals, A.~Graves,
  N.~Kalchbrenner, A.~Senior, and K.~Kavukcuoglu.
\newblock {W}ave{N}et: A generative model for raw audio.
\newblock {\em arXiv:1609.03499}, 2016.

\bibitem{perez2018film}
E.~Perez, F.~Strub, H.~de~Vries, V.~Dumoulin, and A.~Courville.
\newblock {FiLM}: Visual reasoning with a general conditioning layer.
\newblock In {\em AAAI}, 2018.

\bibitem{puny2022fa}
O.~Puny, M.~Atzmon, H.~Ben-Hamu, I.~Misra, A.~Grover, E.~J. Smith, and
  Y.~Lipman.
\newblock Frame averaging for invariant and equivariant network design.
\newblock In {\em ICLR}, 2022.

\bibitem{qi2017pointnet}
C.~R. Qi, H.~Su, K.~Mo, and L.~J. Guibas.
\newblock {P}oint{N}et: Deep learning on point sets for 3{D} classification and
  segmentation.
\newblock In {\em CVPR}, 2017.

\bibitem{qureshi2019mpnet}
A.~H. Qureshi, A.~Simeonov, M.~J. Bency, and M.~C. Yip.
\newblock Motion planning networks.
\newblock In {\em ICRA}, 2019.

\bibitem{ryu2023edf}
H.~Ryu, H.-i. Lee, J.-H. Lee, and J.~Choi.
\newblock Equivariant descriptor fields: {SE(3)}-equivariant energy-based models
  for end-to-end visual robotic manipulation learning.
\newblock In {\em ICLR}, 2023.

\bibitem{schulman2014trajopt}
J.~Schulman, Y.~Duan, J.~Ho, A.~Lee, I.~Awwal, H.~Bradlow, J.~Pan, S.~Patil,
  K.~Goldberg, and P.~Abbeel.
\newblock Motion planning with sequential convex optimization and convex
  collision checking.
\newblock {\em IJRR}, 2014.

\bibitem{thomas2018tfn}
N.~Thomas, T.~Smidt, S.~Kearnes, L.~Yang, L.~Li, K.~Kohlhoff, and P.~Riley.
\newblock Tensor field networks: Rotation- and translation-equivariant neural
  networks for 3{D} point clouds.
\newblock {\em arXiv:1802.08219}, 2018.

\bibitem{wang2024equidp}
D.~Wang, S.~Hart, D.~Surovik, T.~Kelestemur, H.~Huang, H.~Zhao, M.~Yeatman,
  J.~Wang, R.~Walters, and R.~Platt.
\newblock Equivariant diffusion policy.
\newblock In {\em CoRL}, 2024.

\bibitem{zucker2013chomp}
M.~Zucker, N.~Ratliff, A.~D. Dragan, M.~Pivtoraiko, M.~Klingensmith, C.~M.
  Dellin, J.~A. Bagnell, and S.~S. Srinivasa.
\newblock {CHOMP}: Covariant Hamiltonian optimization for motion planning.
\newblock {\em IJRR}, 2013.

\end{thebibliography}
\end{document}